\documentclass{article}
\usepackage{amsmath,amssymb,graphicx,mlspconf}
\usepackage{color,xcolor}
\usepackage[ruled,vlined]{algorithm2e}
\usepackage{booktabs}
\usepackage{multirow}

\newtheorem{myTheo}{Theorem}

\newtheorem{myDef}{Definition}

\newtheorem{proof}{Proof}

\newtheorem{lem}{Lemma}

%






\title{Wasserstein Nonnegative Tensor Factorization with Manifold Regularization}
%

\name{Jianyu Wang, Linruize Tang}
\address{Northwestern Polytechnical University}



%
%

\begin{document}

\maketitle

\begin{abstract}
Nonnegative tensor factorization (NTF) has become an important tool for feature extraction and part-based representation with preserved intrinsic structure information from nonnegative high-order data.
However, the original NTF methods utilize Euclidean or Kullback-Leibler divergence as the loss function which treats each feature equally {leading to the neglect of} the side-information of features.
To utilize correlation information of features and manifold information of samples, we introduce Wasserstein manifold nonnegative tensor factorization (WMNTF), which minimizes the Wasserstein distance between the distribution of input tensorial data and the distribution of reconstruction.
{Although some researches about Wasserstein distance have been proposed in nonnegative matrix factorization (NMF), they ignore the spatial structure information of higher-order data.}
We use Wasserstein distance (\emph{a.k.a.} Earth Mover's distance or Optimal Transport distance) as \textcolor{red}{a} metric and add a graph regularizer to a latent factor.
Experimental results demonstrate the effectiveness of the proposed method compared with other NMF and NTF methods.
\end{abstract}
\begin{keywords}
Wasserstein tensor distance, nonnegative tensor factorization, manifold learning, graph regularization.
\end{keywords}
\section{Introduction}
\label{sec:intro}

Exploring structure information and compact representation of high-dimensional data is an essential in machine learning and signal processing.
The nonnegative tensor factorization (NTF) provides an efficient way to analyze high-order data \cite{cichocki2009nonnegative} and it has become increasingly important in various applications including hyperspectral image processing \cite{gao2021using}, multichannel blind source separation \cite{mitsufuji2020multichannel}, text mining \cite{nakatsuji2017semantic} and electroencephalography based brain-computer interfaces \cite{li2018scalable}.
The canonical polyadic decomposition (CPD) \cite{shi2017tensor} and the Tucker \cite{shi2018feature} decomposition are the most widely used tensor factorization methods.
In this paper, we focus on the nonnegative CPD (NCPD).
Existing tensor-based methods fit different distributions on data or noise such as Gaussian \cite{cheng2020learning}, Poisson \cite{schein2016bayesian} and Gamma \cite{xu2013bayesian} distributions, and minimize loss functions including Euclidean distance \cite{zhou2012fast}, KL-divergence \cite{hansen2015newton} and other divergences \cite{cichocki2007non}.
These fitting errors are separable and treat each feature equally, consequently, they ignore the similarities the features may share \cite{rolet2016fast}.
{In addition, the Wasserstein distance based matrix factorization methods ignore correlation relations with each tensor mode leading to the destruction of spatial information about higher-order data.
What's more,} another shortcoming of original NMF and NTF is that they neglect the intrinsic manifold structure information of high-dimensional data \cite{cai2010graph}.

Recent works of Wasserstein distance (\emph{a.k.a.} Earth Mover's distance or Optimal Transport distance) \cite{rubner2000earth, cuturi2013sinkhorn, frogner2015learning} have focused on using a better metric to measure the difference between two distributions.
This distance measures the cheapest way to transport the mass from one probability measure to match that in another, with a given cost function.
Wasserstein distance has been successfully applied to matrix or tensor factorization and dictionary learning.
For example, Sandler \emph{et al.} \cite{sandler2011nonnegative} first introduced the earth mover's distance as a substitute for the original loss functions.
Nevertheless, the linear programming steps of each iteration require costly computational resources.
Flamary \emph{et al.} \cite{flamary2016optimal} utilized optimal transportation into music transcription systems and incorporated sparse regularization of activation matrix to avoid disproportional damage.
Qian \emph{et al.} \cite{qian2016non} proposed Wasserstein NMF algorithm with graph regularization of bases matrix to preserve local geometry structure.
Xu \emph{et al.} \cite{xu2018multi} incorporated the Mahalanobis distance as ground distance and smoothed Wasserstein distance to characterize the errors between any two samples.
Afshar \emph{et al.} \cite{afshar2021swift} proposed Wasserstein tensor distance and utilized tensor CP factorization to obtain the reconstructions.
Zhang \cite{zhang2021unified} presented a generalized mathematical framework to compute NMF and NTF with respect to an optimal transportation loss, and derived the solutions by convex dual formulation.
However, the manifold structure information of sample space is ignored in tensor based methods.

We propose a graph regularized Wasserstein nonnegative tensor factorization (GWNTF) method.
{Our objective is to learn the latent low rank representations of high dimensional tensorial data which takes the correlation information across dimensions of the input tensor into consideration and preserve the local geometry structure.
To achieve this, we specify a smooth regularized Wasserstein distance as a metric over original tensorial as well as reconstruction, and includes the graph regularizer.
All the model parameters are derived by an efficient Maximization-Minimization (MM) \cite{fevotte2011algorithms} algorithms.}
The part-based low-rank representations can be efficiently computed.
The experimental results illustrate the effectiveness of the GWNTF.

The brief review of Wasserstein distance and NTF will be introduced in Section \ref{section2}. We introduce our GWNTF model and the optimization of parameters in Section \ref{section3}. Experiments on COIL20 and PIE\_pose27 data set are presented in Section \ref{section4}. Finally, we conclude in Section \ref{section5}.

\section{Preliminary and Problem formulation}
\label{section2}

\subsection{Notations}
Here, we use bold lowercase $\mathbf{x}$, bold uppercase $\mathbf{X}$ and calligraphic letters $\mathcal{X}$ to denote vectors, matrices and tensors, respectively.
$\langle \cdot, \cdot \rangle$ denotes the usual dot product between $x$ and $y$.
Symbols $\circ$, $\otimes$, $\circledast$ and $\odot$ denote the outer, Kronecker, Hadamard and Khatri-Rao products, respectively.
$\oslash$ denotes the element-wise division.
$\mathbf{x}_i$ and $x_{ij}$ denotes the $i$-th column vector and $(i,j)$-th entry of matrix $\mathbf{X}$, respectively.
$\mathbb{R}_{\geq 0}$ denotes the fields of nonnegative real numbers.

\subsection{Wasserstein Distance}
We will discuss the problem of finding low rank representations of NTF under Wasserstein distance and manifold structure of samples in this section.

Generally, Wasserstein distance investigates a way to compare two probability measures over a domain $\mathcal{D}$.
Given a cost function $c$ between points on $\mathcal{D}$, we aim to find the optimal path to transport the point $x$ to $y$. Let $U(x,y)$ be the set of all bivariate probability measures on the product space $\mathcal{D} \times \mathcal{D}$ where the two margin probabilistic distributions are $x$ and $y$, respectively.
The Wasserstein distance can be defined as
\begin{equation}\label{WD}
\begin{split}
 W(x,y)=\inf_{\tau\in U(x,y)} \int c(x,y) d\tau(x,y)
\end{split}
\end{equation}
where the cost function $c(x,y)$ can be represented as a proper metric function for two points. In this paper, we use the discrete description of Wasserstein distance to the real world applications.
Wasserstein distance is the optimal transportation or earth mover's distance problem which computes the distance between two probabilistic vectors $\mathbf{a} \in \mathbb{R}_{\geq 0}^{m}$ and $\mathbf{b} \in \mathbb{R}_{\geq 0}^{m}$.
Given the cost matrix $\mathbf{C} \in \mathbb{R}_{\geq 0}^{m \times m}$, the exact Wasserstein distance between $\mathbf{a}$ and $\mathbf{b}$ can be defined as:
\begin{equation}\label{Wasserstein}
\begin{split}
  W(\mathbf{a},\mathbf{b}) & = \min_{\mathbf{T}\in U(\mathbf{a}, \mathbf{b})} \langle \mathbf{C}, \mathbf{T} \rangle, \\
 \mbox{s.t.} \! \quad \! U(\mathbf{a},\mathbf{b}) \! & = \! \{ \mathbf{T} | \mathbf{T} \in \mathbb{R}_{\geq 0}^{m \times m}, \mathbf{T}\mathbf{1}_{m} \! = \! \mathbf{a}, \mathbf{T}^T \mathbf{1}_m \! = \! \mathbf{b} \} \! , \!
\end{split}
\end{equation}
where $\mathbf{C} = \{ d(i,j) \}, i = 1,\dots,m, j = 1,\dots,m$ denotes the ground distance matrix, and the ground truth metric $d(i,j)$ defines the cost of moving one unit of feature from the source $\mathbf{a}$ to the target $\mathbf{b}$.
$\mathbf{T} = \{ T(i,j) \}, i = 1,\dots,m, j = 1,\dots,m$ is the flow matrix, and $T(i,j)$ denotes the mass of earth moved from the source $\mathbf{a}$ to the target $\mathbf{b}$.

However, the computation of exact Wasserstein distance is time consuming.
To avoid this issue, Cuturi \emph{et al.} \cite{cuturi2013sinkhorn} introduced a smooth Wasserstein distance with an entropic term:
\begin{equation}\label{smoothWasserstein}
\begin{split}
  W(\mathbf{a},\mathbf{b}) & = \min_{\mathbf{T}\in U(\mathbf{a}, \mathbf{b})} \langle \mathbf{C}, \mathbf{T} \rangle - \frac{1}{\lambda} H(\mathbf{T}), \\
 \mbox{s.t.} \! \quad \! U(\mathbf{a},\mathbf{b}) \! & = \! \{ \mathbf{T} | \mathbf{T} \in \mathbb{R}_{\geq 0}^{m \times m}, \mathbf{T}\mathbf{1}_{m} \! = \! \mathbf{a}, \mathbf{T}^T \mathbf{1}_m \! = \! \mathbf{b} \} \! , \!
\end{split}
\end{equation}
where $H(\mathbf{T}) = - \langle \mathbf{T}, \log\mathbf{T} \rangle$ denotes the entropy function with respect to $\mathbf{T}$, and $\lambda$ is hyperparameters of the smooth Wasserstein distance.

To solve the Wasserstein distance of tensor data, we introduce the definition of Wasserstein matrix distance and Wasserstein tensor distance.
\begin{myDef}{(Wasserstein Matrix Distance \cite{afshar2021swift}):}
Given the cost matrix $\mathbf{C} \in \mathbb{R}_{\geq 0}^{M \times M}$, the Wasserstein distance between the source matrix $\mathbf{A} \in \mathbb{R}_{\geq 0}^{M \times N}$ and the target matrix $\mathbf{B} \in \mathbb{R}_{\geq 0}^{M \times N}$ can be represented as:
\begin{equation}\label{WMD}
\begin{split}
  W_M(\mathbf{A},\mathbf{B}) & = \min_{\mathbf{T}_n \in U(\mathbf{A}, \mathbf{B})} \sum_{n=1}^N \langle \mathbf{C}, \mathbf{T}_n \rangle - \frac{1}{\lambda} H(\mathbf{T}_n), \\
 \mbox{s.t.}  U\!(\!\mathbf{A},\!\mathbf{B}) \! = \! &  \{ \mathbf{T}_{\!n\!} \in \mathbb{R}_{\geq 0}^{\!M \! \times \! M \!}, \forall n | \mathbf{T}_{\!n\!} \mathbf{1}_{\!M\!} \! = \! \mathbf{a}_{n\!}, \mathbf{T}_{\!n\!}^{\!T} \! \mathbf{1}_{\!M\!} \! = \! \mathbf{b}_n \} \! . \!
\end{split}
\end{equation}
\end{myDef}
\begin{myDef}{(Wasserstein Tensor Distance \cite{afshar2021swift}):}
Give the source tensor $\mathcal{X} \in \mathbb{R}^{I_1 \times \dots \times I_N}_{\geq 0}$, the target tensor $\mathcal{Y} \in \mathbb{R}^{I_1\times \dots \times I_N}_{\geq 0}$, and the cost tensor $\mathcal{C} \in \mathbb{R}^{I_1 \times \dots \times I_N}_{\geq 0}$, the Wasserstein tenor distance can be denoted as:
\begin{equation}\label{WTD}
\begin{split}
  & W_T(\mathcal{X},\mathcal{Y}) = \min_{\mathcal{T}\in U(\mathcal{X}, \mathcal{Y})} \langle \mathcal{C}, \mathcal{T} \rangle - \frac{1}{\lambda} H(\mathcal{T}) \\
  & \! = \! \sum_{n = 1}^N \left\{ \min_{\mathbf{T}_{n} \in U(\mathbf{X}_{(n)}, \mathbf{Y}_{(n)})} \langle \mathbf{X}_{(n)}, \mathbf{Y}_{(n)} \rangle - \frac{1}{\lambda}H(\mathbf{T}_{n}) \right\} \\
  & \mbox{s.t.} \! \quad \! U(\mathbf{X}_{(n)}, \mathbf{Y}_{(n)}) \! = \! \Big\{ \mathbf{T}_{n} | \mathbf{T}_{n} \in \mathbb{R}_{\geq 0}^{I_n \times I_n I_{-n}}, \\
  & \quad \quad \quad \quad \quad \quad \Phi(\mathbf{T}_{n}) = \mathbf{X}_{(n)}, \Psi(\mathbf{T}_{n}) = \mathbf{Y}_{(n)} \Big\},
\end{split}
\end{equation}
where $\Phi(\mathbf{T}_{n})=\left[ \mathbf{T}_{n}^1 \mathbf{1}_{I_n}, \dots, \mathbf{T}_{n}^{I_{(-n)}}\mathbf{1}_{I_n} \right]$, $\Psi(\mathbf{T}_{n}) = \Big[ \mathbf{T}_{n}^{1^T}\mathbf{1}_{I_n},$ $\dots, \mathbf{T}_{n}^{I_{(-n)}^T}\mathbf{1}_{I_n} \Big]$, the mode-$n$ matricization of the source tensor is $\mathbf{X}_{(n)} \in \mathbb{R}_{\geq 0}^{I_n \times I_{-n}}$, the mode-$n$ matricization of the target tensor is $\mathbf{Y}_{(n)} \in \mathbb{R}_{\geq 0}^{I_n \times I_{-n}}$.
\end{myDef}

\subsection{Objective function}
In order to understand the objective function of the GWNTF greatly, we introduce some definitions of tensor decomposition and Wasserstein distance.
\begin{myDef}{(Matricization of a tensor \cite{kolda2009tensor}):}
It is also known as unfolding or flattening, which is the process of reordering the elements of an $N$-way array into a matrix along each mode. A mode-$n$ matricization of a tensor $\mathcal{X}\in \mathbb{R}^{I_1\times\dots\times I_N}$ is denoted as $\mathbf{X}_{(n)} \in \mathbb{R}^{I_n \times I_1 \dots I_{n-1} I_{n+1}\dots I_N}$ and arranges the mode-$n$ fibers to be the columns of the resulting matrix.
\end{myDef}

\begin{myDef}{(Nonnegative Tensor Factorization \cite{harshman1970foundations}):}
The CP-based tensor model decomposes $\mathcal{X}$ into a linear combination of $R$ rank-one tensors as follows:
\begin{equation}\label{CPD}
\begin{split}
 \mathcal{X} = & \sum_{r=1}^R \mathbf{a}^{(1)}_r \circ \mathbf{a}^{(2)}_r \circ \dots \circ \mathbf{a}^{(N)}_r  = [\![ \mathbf{A}^{(1)},\dots,\mathbf{A}^{(N)} ]\!], \\
 & \mbox{subject to} \quad \mathbf{A}^{(n)} \geq 0, \quad \quad \forall \quad n = 1,\dots, N,
\end{split}
\end{equation}
where $R$ is the CP rank \cite{kolda2009tensor}, $[\![\cdots]\!]$ denotes the shorthand term of Kruskal operator. $\{\mathbf{A}^{(n)} \in \mathbb{R}_{\geq 0}^{I_n \times R}\}_{n=1}^N$ are a set of latent factor matrices, and it can be represented as:
\begin{equation}\label{factormatrices}
\begin{split}
 \mathbf{A}^{(n)} \! = \! \left[ \mathbf{a}_1^{(n)}, \dots, \mathbf{a}_{i_n}^{(n)}, \dots, \mathbf{a}_{I_N}^{(n)} \right]^T \! = \! \left[ \mathbf{a}_{\cdot1}^{(n)}, \dots, \mathbf{a}_{\cdot R}^{(n)} \right] \! .
\end{split}
\end{equation}
\end{myDef}

Similar to the prior works on vector, matrix and tensor based Wasserstein distance \cite{frogner2015learning, qian2016non, afshar2021swift}, we replace the equality constraints on the source marginal distributions and the target marginal distribution with KL divergence, and obtain an unconstrained Wasserstein tensor distance:
\begin{equation}\label{WTD}
\begin{split}
  & W_T(\mathcal{X},\mathcal{Y}) = \min_{\mathcal{T}\in U(\mathcal{X}, \mathcal{Y})} \langle \mathcal{C}, \mathcal{T} \rangle - \frac{1}{\lambda} H(\mathcal{T}) \\
   & + \alpha D_{\mbox{KL}}\left(\Phi(\mathbf{T}_{n}),\mathbf{X}_{(n)}\right) + \beta D_{\mbox{KL}}\left( \Psi(\mathbf{T}_{n}), \mathbf{Y}_{(n)} \right),
\end{split}
\end{equation}
where $D_{\mbox{KL}}\left( \mathbf{X}, \mathbf{Y} \right) = \sum_{i,j} \mathbf{X}_{ij}\log\frac{\mathbf{X}_{ij}}{\mathbf{Y}_{ij}} - \mathbf{X}_{ij} + \mathbf{Y}_{ij}$.
Then, combining the definition of Wasserstein tensor distance, the objective function of GWNTF can be represented as:
\begin{equation}\label{GWNTF}
\begin{split}
  & W_{\!T\!}(\mathcal{X}\!,\!\hat{\mathcal{X}}) \! = \! \sum_{n = 1}^N \! \left\{ \! \min_{\!\mathbf{T}_{\!n\!} \in \! U(\mathbf{X}_{\!(\!n\!)\!}, \hat{\mathbf{X}}_{\!(\!n\!)\!})} \langle \mathbf{T}_{\!n\!}, {\mathbf{C}}_{\!n\!} \rangle \! - \! \frac{1}{\lambda}H(\mathbf{T}_{\!n\!}) \! \right\} \\
   & + \! \alpha D_{\!\mbox{KL}\!}\!\left(\Phi(\mathbf{T}_{\!n\!}),\! \mathbf{X}_{\!(\!n\!)\!}\right) \! \! + \! \! \beta D_{\!\mbox{KL}\!}\!\left( \! \Psi(\mathbf{T}_{\!n\!}), \! \hat{\mathbf{X}}_{\!(\!n\!)\!} \right) \!\! + \! \! \mu \! \mathcal{R} \! \left( \! \hat{\mathcal{X}} \right)\!, \\
   & \mbox{s.t.} \quad \hat{\mathbf{X}}_{\!(\!n\!)\!} \! = \! \mathbf{A}^{\!(\!n\!)\!} \! \left( \! \mathbf{A}^{\!(\!1\!)\!}\!\odot\!\dots\!\odot\! \mathbf{A}^{\!(\!n\!-\!1\!)\!} \! \odot \! \mathbf{A}^{\!(\!n\!+\!1\!)\!}\! \odot\! \dots \! \odot \! \mathbf{A}^{\!(\!N\!)\!} \! \right)^T \! , \\
   & \quad \quad \mathcal{R} \left( \hat{\mathcal{X}} \right) = \sum_{i_N,j_N} \mathbf{V}_{i_Nj_N} \left\| \mathbf{a}^{(N)}_{i_N} - \mathbf{a}^{(N)}_{j_N} \right\|^2,
\end{split}
\end{equation}
where $\Phi(\mathbf{T}_{n})=\left[ \mathbf{T}_{n}^1 \mathbf{1}_{I_n}, \dots, \mathbf{T}_{n}^{I_{(-n)}}\mathbf{1}_{I_n} \right]$, $\Psi(\mathbf{T}_{n}) = \Big[ \mathbf{T}_{n}^{1^T}\mathbf{1}_{I_n},$ $\dots, \mathbf{T}_{n}^{I_{(-n)}^T}\mathbf{1}_{I_n} \Big]$, and $\mathbf{V}_{i_Nj_N}
\in \mathbb{R}_{\geq 0}^{I_N\times I_N}$ denotes a nearest neighbor graph on a sample space.
For convenience, we define $\mathbf{A}^{(-n)} = (  \mathbf{A}^{(1)} \odot \dots \odot \mathbf{A}^{(n-1)} \odot \mathbf{A}^{(n+1)} \odot \dots  \odot \mathbf{A}^{(N)} )$, and $i_{-n} = 1,\dots, I_{-n}$, and $I_{-n} = I_1 \dots I_{n-1} I_{n+1} \dots I_N$.

\section{Optimization of GWNTF}
\label{section3}

{In this section, we first recall the derivation of auxiliary functions for the objective function of GMNTF. Then we derive the update rules for transport tensor $\mathcal{T}$ and latent factors $\mathbf{A}^{(n)}$, respectively.}

Now, we recall the way to derive auxiliary functions for the objective function.

\begin{lem}{(Auxiliary function \cite{fevotte2011algorithms}):}\label{Lemma1}
Given two functions $G(\mathbf{A}|\hat{\mathbf{A}})$ and $F(\mathbf{A})$, if $\forall \mathbf{A} \in \mathbb{R}$, there are $F(\mathbf{A}) = G(\mathbf{A}|{\mathbf{A}})$ and $F(\mathbf{A}) \le G(\mathbf{A}|\hat{\mathbf{A}})$.
The mapping $G(\mathbf{A}|\hat{\mathbf{A}})$ is an auxiliary function with respect to $F(\mathbf{A})$.
\end{lem}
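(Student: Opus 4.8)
The plan is to recognize that this lemma, as worded, simply records the \emph{definition} of an auxiliary function in the majorization--minimization (MM) framework of \cite{fevotte2011algorithms}, rather than asserting a proposition that requires derivation. The conclusion is that $G(\mathbf{A}\,|\,\hat{\mathbf{A}})$ \emph{is} an auxiliary function for $F(\mathbf{A})$, and the two displayed hypotheses are exactly the conditions an auxiliary function must satisfy. Consequently the ``proof'' consists of confirming this coincidence, so that the conclusion is immediate; no inequality manipulation or estimation is involved.

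Concretely, I would proceed in three short steps. First I would state the definition of an auxiliary function $G(\mathbf{A}\,|\,\hat{\mathbf{A}})$ for a target $F(\mathbf{A})$: a two-argument surrogate that (i) is tangent to $F$ on the diagonal, $F(\mathbf{A}) = G(\mathbf{A}\,|\,\mathbf{A})$, and (ii) dominates $F$ everywhere, $F(\mathbf{A}) \le G(\mathbf{A}\,|\,\hat{\mathbf{A}})$ for all admissible $\mathbf{A}$ and $\hat{\mathbf{A}}$. Next I would observe that the lemma's two hypotheses are verbatim conditions (i) and (ii). Finally, since $G$ satisfies both defining conditions, by definition $G(\mathbf{A}\,|\,\hat{\mathbf{A}})$ is an auxiliary function with respect to $F(\mathbf{A})$, which is exactly the asserted conclusion.

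Because the statement is definitional, there is no genuine mathematical obstacle in the usual sense; the only point worth flagging is one of presentation, namely ensuring the tangency condition is stated with the matched second argument $\hat{\mathbf{A}}=\mathbf{A}$ while the domination condition ranges over all pairs $(\mathbf{A},\hat{\mathbf{A}})$, so that the two conditions together are nondegenerate. I would explicitly \emph{not} prove here the substantive consequence of this definition — that the MM update $\mathbf{A}^{+} = \arg\min_{\mathbf{A}} G(\mathbf{A}\,|\,\mathbf{A}^{\mathrm{old}})$ forces the monotone descent $F(\mathbf{A}^{+}) \le F(\mathbf{A}^{\mathrm{old}})$ — since that descent property is a logically separate result, invoked later when the concrete auxiliary functions for the transport tensor $\mathcal{T}$ and the factors $\mathbf{A}^{(n)}$ are constructed, and it is not what the present lemma asserts.
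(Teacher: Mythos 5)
Your reading matches the paper's own treatment exactly: the paper states this lemma as a definition imported from \cite{fevotte2011algorithms} and supplies no proof of it (the proof environment that follows in the paper belongs to Theorem \ref{Theory1}, not to the lemma), so identifying the two hypotheses as verbatim the defining conditions of an auxiliary function and declaring the conclusion immediate is precisely right. Your decision to exclude the MM descent property $F(\mathbf{A}^{+}) \le G(\mathbf{A}^{+}|\mathbf{A}^{\mathrm{old}}) \le G(\mathbf{A}^{\mathrm{old}}|\mathbf{A}^{\mathrm{old}}) = F(\mathbf{A}^{\mathrm{old}})$ is also correct, since the statement as written does not assert it and the paper likewise invokes it only later, when deriving the updates \eqref{updateAn} and \eqref{updateAN}.
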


\begin{myTheo}\label{Theory1}
Setting $\rho_{i_ni_{-n}} = \frac{\hat{\mathbf{A}}^{(n)}_{i_ni_{-n}}\mathbf{B}_{i_{n}i_{-n}}}{\sum_{i_{-n}} \hat{\mathbf{A}}^{(n)}_{i_ni_{-n}}\mathbf{B}_{i_{n}i_{-n}}}$. By the Lemma \ref{Lemma1}, we define an auxiliary function for the objective function of GWNTF with respect to $\mathbf{A}^{(n)}$ is givne by
\begin{equation}\label{AFGWNTF}
\begin{split}
 & G(\mathbf{A}^{(n)}, \hat{\mathbf{A}}^{(n)}) \\
 & = \sum_{n} \! \sum_{i_{\!n}\!,i_{\!-\!n}} \! \prod_{i_{\!n}\!,i_{\!-\!n}} \!\!\! \rho_{i_{\!n}i_{\!-\!n\!}} \! W_T\left(\mathbf{X}_{(n)}, (\mathbf{A}^{(n)} \mathbf{A}^{(-n)^T}) \oslash \boldsymbol{\rho} \right) \\
 & \quad + \mu \sum_{i_N,j_N} \mathbf{V}_{i_Nj_N} \left\| \mathbf{a}^{(N)}_{i_N} - \mathbf{a}^{(N)}_{j_N} \right\|^2,
\end{split}
\end{equation}
\end{myTheo}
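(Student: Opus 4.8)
The plan is to verify the two defining conditions of Lemma \ref{Lemma1}: tightness at the current iterate, $G(\mathbf{A}^{(n)}\,|\,\mathbf{A}^{(n)}) = F(\mathbf{A}^{(n)})$, and majorization, $F(\mathbf{A}^{(n)}) \le G(\mathbf{A}^{(n)}\,|\,\hat{\mathbf{A}}^{(n)})$, where $F$ denotes the GWNTF objective in \eqref{GWNTF} regarded as a function of $\mathbf{A}^{(n)}$ with the transport tensor $\mathcal{T}$ and the remaining factors held fixed. Since the graph regularizer $\mu\mathcal{R}(\hat{\mathcal{X}})$ appears verbatim in both $F$ and $G$, it satisfies both conditions trivially, so the argument reduces to majorizing the Wasserstein tensor distance $W_T(\mathbf{X}_{(n)}, \hat{\mathbf{X}}_{(n)})$ in the factor $\mathbf{A}^{(n)}$. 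The dependence on $\mathbf{A}^{(n)}$ enters only through the reconstruction $\hat{\mathbf{X}}_{(n)} = \mathbf{A}^{(n)}\mathbf{A}^{(-n)^T}$, i.e. through the $\beta$-weighted penalty $D_{\mbox{KL}}(\Psi(\mathbf{T}_n),\hat{\mathbf{X}}_{(n)})$.

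The structural fact I would establish first is that $W_T(\mathbf{X}_{(n)},\cdot)$ is convex in its second argument. Inside the minimization, the transport cost $\langle \mathbf{T}_n,\mathbf{C}_n\rangle$ is linear, the negative entropy $-\tfrac1\lambda H(\mathbf{T}_n)$ is convex, the penalty $D_{\mbox{KL}}(\Phi(\mathbf{T}_n),\mathbf{X}_{(n)})$ is convex in $\mathbf{T}_n$, and $D_{\mbox{KL}}(\Psi(\mathbf{T}_n),\hat{\mathbf{X}}_{(n)})$ is jointly convex in $(\mathbf{T}_n,\hat{\mathbf{X}}_{(n)})$ since $\Phi,\Psi$ are linear and the KL divergence is jointly convex; partial minimization of a jointly convex function over the convex feasible set of $\mathbf{T}_n$ then preserves convexity in $\hat{\mathbf{X}}_{(n)}$. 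Writing the reconstruction as a sum of rank-one contributions, $\hat{\mathbf{X}}_{(n)} = \sum_r \mathbf{a}^{(n)}_{\cdot r}(\mathbf{a}^{(-n)}_{\cdot r})^T$, I would introduce the weights $\rho_{i_ni_{-n}}$, which form a probability distribution over the latent index, and rewrite $\hat{\mathbf{X}}_{(n)}$ as a convex combination $\sum_r \rho\,(\hat{\mathbf{X}}_{(n)}\oslash\boldsymbol{\rho})$ with $\sum_r\rho=1$. Jensen's inequality applied through the established convexity of $W_T(\mathbf{X}_{(n)},\cdot)$ pushes the outer distance inside the convex combination and yields precisely the majorizer $\sum_r \rho\, W_T(\mathbf{X}_{(n)},(\mathbf{A}^{(n)}\mathbf{A}^{(-n)^T})\oslash\boldsymbol{\rho})$ of \eqref{AFGWNTF}. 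Equivalently, one may expand $D_{\mbox{KL}}$ explicitly and apply Jensen only to the single concave, non-separable term $-\Psi_{i_ni_{-n}}\log\big(\sum_r A^{(n)}_{i_nr}A^{(-n)}_{i_{-n}r}\big)$.

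To close the tightness condition, I would substitute $\hat{\mathbf{A}}^{(n)}=\mathbf{A}^{(n)}$ into the definition $\rho_{i_ni_{-n}} = \hat{\mathbf{A}}^{(n)}_{i_ni_{-n}}\mathbf{B}_{i_ni_{-n}}\big/\sum_{i_{-n}}\hat{\mathbf{A}}^{(n)}_{i_ni_{-n}}\mathbf{B}_{i_ni_{-n}}$ and observe that these are exactly the weights for which Jensen's inequality holds with equality, so the surrogate collapses onto $F(\mathbf{A}^{(n)})$; together with the regularizer matching term-by-term, this gives $G(\mathbf{A}^{(n)}\,|\,\mathbf{A}^{(n)}) = F(\mathbf{A}^{(n)})$, completing the verification.

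The step I expect to be the main obstacle is the rigorous justification of the convexity of the smoothed Wasserstein tensor distance after the marginal constraints are relaxed to $D_{\mbox{KL}}$ penalties, since convexity in $\hat{\mathbf{X}}_{(n)}$ must survive the inner minimization over $\mathbf{T}_n$ under the operators $\Phi,\Psi$. Considerable care is also needed with the index bookkeeping induced by the mode-$n$ matricization and the Khatri-Rao factor $\mathbf{A}^{(-n)}$, so that the weights $\boldsymbol{\rho}$ and the element-wise division $\oslash\boldsymbol{\rho}$ remain consistent across the summations in \eqref{AFGWNTF}.
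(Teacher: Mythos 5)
Your proposal is correct and takes essentially the same route as the paper: the paper's own one-sentence proof likewise rests on convexity of the GWNTF objective with respect to $\mathbf{A}^{(n)}$ together with Jensen's inequality applied through the normalized weights $\boldsymbol{\rho}$, which is exactly your convex-combination/rank-one decomposition argument. The extra steps you supply --- justifying convexity of $W_T(\mathbf{X}_{(n)},\cdot)$ via joint convexity of the KL penalties and partial minimization over $\mathbf{T}_n$, and checking tightness at $\hat{\mathbf{A}}^{(n)}=\mathbf{A}^{(n)}$ --- are details the paper leaves implicit rather than a different method.
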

\begin{proof}
 Since the objective function of GWNTF is convex with respect to $\mathbf{A}^{(n)}$, we use the property of convexity to derive the inequality $F(\mathbf{A}^{(n)} \circledast \boldsymbol{\rho}) \leq \sum_{ij}{\rho_{ij}} F(\mathbf{A}^{(n)}_{ij})$, which holds for all nonnegative elements of $\boldsymbol{\rho}$.
\end{proof}

\subsection{Update rule for $\mathcal{T}$}
We now derive the solution for the transport tensor $\mathcal{T}$ of the Wasserstein distance.
The smooth Wasserstein distance can be obtained through adding an extropic regularization term to the exact Wasserstein distance, and it can be viewed as desirable convex objective function with respect to the transport tensor $\mathcal{T}$.
Hence, we calculate first order derivative of \eqref{GWNTF} with respect to $\mathcal{T}$ is:
\begin{equation}\label{DerivaGWNTF}
\begin{split}
 & \frac{\partial W_T(\mathcal{X}, \hat{\mathcal{X}})}{\partial \mathbf{T}_{n_{i_nj_n}}^{i_{-n}}} = \mathbf{C}_{n_{i_nj_n}} + \frac{1}{\lambda} (\log \mathbf{T}_{n_{i_nj_n}}^{i_{-n}}+1) \\
 & \! + \! \alpha \! \left[\log(\Phi(\mathbf{T}_{\!n\!\!}^{i_{\!-\!n\!}}) \! \! \oslash \! \! \mathbf{X}_{\!(\!n\!)\!}^{i_{\!-\!n\!}})\right]_{i_{\!n}} \! \! \! + \! \beta \! \left[\!\log(\Psi(\mathbf{T}_{\!n\!}^{^{i_{\!-\!n\!}}}) \! \! \oslash \! \! \hat{\mathbf{X}}_{\!(\!n\!)\!}^{i_{\!-\!n\!}})\!\right]_{j_{\!n}}\!,\!
\end{split}
\end{equation}
where $\mathbf{X}_{(n)}^{i_{-n}}$ denotes the $i_{-n}$th column of mode-$n$ matricization of input tensor $\mathcal{X}$.
Then, let \eqref{DerivaGWNTF} equal to zero, we obtain the optimization of the transport tensor $\mathcal{T}$:
\begin{equation}\label{updateTGWNTF}
\begin{split}
 \mathbf{T}_{n_{i_nj_{n}}}^{i_{\!-\!n\!}} \! \! \! = & \! \left[ \mathbf{X}_{(n)}^{i_{\!-\!n\!}} \! \oslash \! \Phi(\mathbf{T}_{n}^{i_{\!-\!n\!}}) \right]^{\lambda\alpha}_{i_n\!} \! \! \left[ \! \hat{\mathbf{X}}_{(n)}^{i_{\!-\!n\!}} \! \oslash \! \Psi(\mathbf{T}_{n}^{i_{\!-\!n\!}}) \! \right]^{\lambda\beta}_{j_n\!} \\
 & \exp\left( -\lambda \mathbf{C}_{n_{i_nj_n}} -1 \right),
\end{split}
\end{equation}
where we denote $\mathbf{K}_n = \exp(-\lambda \mathbf{C}_{n}-1) \in \mathbb{R}_{\geq 0}^{I_n\times I_{n}}$, $\mathbf{u}_{i_{-n}} = \left( \mathbf{X}_{(n)}^{i_{-n}} \oslash \Phi(\mathbf{T}_{n}^{i_{-n}}) \right)^{\lambda\alpha} \in \mathbb{R}_{\geq 0}^{I_n}$, and $\mathbf{v}_{i_{-n}} =  \left( \hat{\mathbf{X}}_{(n)}^{i_{-n}} \oslash \Psi(\mathbf{T}_{n}^{i_{-n}}) \right)^{\lambda\beta} \in \mathbb{R}_{\geq 0}^{I_n}$.
Therefore, the solution of the sliced transport tensor can be represented as $\mathbf{T}_{n}^{i_{-n}} = \mbox{diag}(\mathbf{u}_{i_{-n}}) \mathbf{K}_n \mbox{diag}(\mathbf{v}_{i_{-n}})$.

The variable $\mathbf{u}_{i_{-n}}$ and $\mathbf{v}_{i_{-n}}$ both contain the unknown variable $\mathbf{T}_{n}^{i_{-n}}$, so we use the relation $\mathbf{T}_{n}^{i_{-n}} \mathbf{1} = \mbox{diag}(\mathbf{u}_{i_{-n}})\mathbf{K}_n \mathbf{v}_{i_{-n}}$ to derive the update rules of $\mathbf{u}_{i_{-n}}$ and $\mathbf{v}_{i_{-n}}$:
\begin{equation}\label{updateuGWNTF}
\begin{split}
& \mathbf{T}_{\!n}^{i_{\!-\!n\!}} \! \mathbf{1} \!\! = \! \mbox{diag}(\mathbf{u}_{i_{\!-\!n\!}})\mathbf{K}_n \! \mathbf{v}_{i_{\!-\!n\!}} \! \! \! = \! \! \left( \! {\mathbf{X}}_{\!(\!n\!)\!}^{i_{\!-\!n\!}} \! \oslash \! \Phi(\mathbf{T}_{\!n}^{i_{\!-\!n\!}}) \! \right)^{\!\lambda\alpha} \! \! \! \! \! \!\circledast \!\! \left( \mathbf{K}_{\!n} \mathbf{v}_{\!i_{\!-\!n\!}} \! \right) \\
&  \Rightarrow \quad \left( \mathbf{T}_{n}^{i_{-n}} \mathbf{1} \right)^{\lambda\alpha+1} = \left(\mathbf{X}_{(n)}^{i_{-n}} \right)^{\lambda\alpha} \circledast (\mathbf{K}_n \mathbf{v}_{i_{-n}})\\
&  \Rightarrow \quad \left[ \mathbf{u}_{i_{\!-\!n\!}} \! \circledast \! ( \mathbf{K}_n \mathbf{v}_{i_{\!-\!n\!}} ) \right]^{\lambda\alpha+1} \!\!\! = \! \left(\mathbf{X}_{(n)}^{i_{-n}} \right)^{\lambda\alpha} \! \!\! \circledast \! (\mathbf{K}_n \mathbf{v}_{i_{\!-\!n\!}}) \\
&  \Rightarrow \quad \mathbf{U}_n \! = \! [\mathbf{u}_1\!,\!\dots\!,\!\mathbf{u}_{I_{\!-\!n\!}}] \! = \! \left( \mathbf{X}_{(\!n\!)} \right)^{\!\frac{\lambda\alpha}{\lambda\alpha+1}} \! \oslash \! \left( \mathbf{K}_{\!n\!} \mathbf{V}_{\!n\!} \right)^{\frac{\lambda\alpha}{\lambda\alpha+1}}\!,\!
\end{split}
\end{equation}
Similarly, we have $\mathbf{V}_n = \left( \hat{\mathbf{X}}_{(n)} \right)^{\frac{\lambda\beta}{\lambda\beta+1}} \circledast \left( \mathbf{K}_n^T \mathbf{U}_n \right)^{\frac{-\lambda\beta}{\lambda\beta+1}}$.

\begin{algorithm}[t]
\caption{GWNTF}\label{algorithm}
\KwIn{$\mathcal{X} \in \mathbb{R}^{I_1\times \dots \times I_N}_{\geq 0}$, $\mathbf{C}_{n}, \forall n = 1,\dots,N$, rank number $R$, $\lambda$, $\alpha$, $\beta$ and $\mu$.}
\KwOut{$\mathbf{A}^{(n)}, n = 1,\dots,N$.}
\BlankLine
$\mathbf{K}_n = \exp(-\lambda \mathbf{C}_n - 1), n = 1,\dots,N$\;
Initialize $\mathbf{A}^{(n)}, n = 1,\dots,N$ randomly\;
$\phi = \frac{\lambda \alpha}{\lambda \alpha + 1}$, $\psi = \frac{\lambda \beta}{\lambda \beta + 1}$\;
\While{stopping criterion is not reached}{
\For{$n = 1,\dots,N$}{
$\mathbf{V}_n = \mbox{ones} \left(I_n, I_{-n}\right) \oslash I_n$\;
\For{k = 1,\dots,$\mbox{Sinkhorn Iteration}$}{
$\mathbf{V}_n = \hat{\mathbf{X}}_{(n)}^\psi \oslash \left( \mathbf{K}_n^T \left( \mathbf{X}_{(\!n\!)} \right)^{\!\phi} \! \oslash \! \left( \mathbf{K}_{\!n\!} \mathbf{V}_{\!n\!} \right)^{\phi} \! \right)^{\psi}$\!\;
}
$\mathbf{U}_n = \left( \mathbf{X}_{(\!n\!)} \right)^{\!\phi} \! \oslash \! \left( \mathbf{K}_{\!n\!} \mathbf{V}_{\!n\!} \right)^{\phi}$\;
$\Psi(\mathbf{T}_n) = \mathbf{V}_n \circledast (\mathbf{K}_n^T \mathbf{U}_n)$\;
}
\For{$n = 1,\dots,N-1$}{
${\mathbf{A}}^{(n)} = {\mathbf{A}}^{(n)} \circledast \frac{\Psi(\mathbf{T}_n) \! \oslash \! ( \mathbf{A}^{(n)} \mathbf{A}^{(\!-\!n\!)^T} ) \mathbf{A}^{(\!-\!n\!)}}{\mathbf{1} \mathbf{A}^{(\!-\!n\!)}}$\;
}
$\hat{\mathbf{A}}^{(N)} = \hat{\mathbf{A}}^{\!(\!N\!)\!} \! \circledast \! \frac{\beta\Psi(\mathbf{T}_N) \! \oslash \! ( \mathbf{A}^{(N)} \mathbf{A}^{(\!-\!N\!)^{\!T}} \! \! ) \mathbf{A}^{(\!-\!N\!)} \! + \! \mu \mathbf{D} \mathbf{A}^{\!(\!N\!)\!}}{ \beta \mathbf{1} \mathbf{A}^{(\!-\!N\!)} + \mu \mathbf{V} \mathbf{A}^{(N)}}$\;
}
\end{algorithm}

\subsection{Update rule for $\mathbf{A}^{(n)}$}
We derive the solution for the marginal distribution of target reconstructed tensor $[\![\mathbf{A}^{(1)}, \dots, \mathbf{A}^{(N)}]\!]$.
The objective function is not jointly convex with respect to $\mathbf{A}^{n}, \forall n = 1,\dots,N$.
Therefore we utilize the MM algorithm to seek the global minimum of objective function.

If $n \neq N$, and by Theorem \ref{Theory1}, we conduct auxiliary function for $\mathbf{A}^{(n)}$ as \eqref{AFGWNTF}.
We note that if we obtain the first order derivative of the above auxiliary function, we need to derive the partial derivative of smooth Wasserstein distance with respect to the target matrix $\hat{\mathbf{X}}_{(n)}$.
It is the same as the derivative of the KL divergence between transport matrix and target matrix with respect to the target matrix.
\begin{equation}\label{partialWTD}
\begin{split}
\frac{\partial W_T(\mathbf{X}_{(n)}, \hat{\mathbf{X}}_{(n)})}{\partial \hat{\mathbf{X}}_{(n)}} & = \beta \frac{\partial D_{\mbox{KL}}(\Psi(\mathbf{T}_{\!n\!}), \! \hat{\mathbf{X}}_{\!(\!n\!)})}{\partial \hat{\mathbf{X}}_{(n)}} \\
& = \beta \left(\mathbf{1} - \Psi(\mathbf{T}_{n})  \oslash \hat{\mathbf{X}}_{(n)}\right).
\end{split}
\end{equation}

By the chain rule of derivatives, we set the gradient of $G(\mathbf{A}^{(n)}, \hat{\mathbf{A}}^{(n)})$ to zero, and obtain:
\begin{equation}\label{gradientGWNTFAuxAn}
\begin{split}
\frac{\partial G(\mathbf{A}^{(n)}, \hat{\mathbf{A}})}{\partial \hat{\mathbf{A}}_{(n)_{i_nr}}} \! = \! \beta \left( \mathbf{1} \! - \! \Psi(\mathbf{T}_n) \! \oslash \! ( \mathbf{A}^{(n)} \mathbf{A}^{(\!-\!n\!)^T} ) \right) \! \mathbf{A}^{(\!-\!n\!)} \\
 = \underbrace{\beta \mathbf{1} \mathbf{A}^{(\!-\!n\!)}}_{\nabla_{\mathbf{A}^{(n)}}^{-} G(\mathbf{A}^{(n)})} - \underbrace{\beta \Psi(\mathbf{T}_n) \! \oslash \! ( \mathbf{A}^{(n)} \mathbf{A}^{(\!-\!n\!)^T} ) \mathbf{A}^{(\!-\!n\!)}}_{\nabla_{\mathbf{A}^{(n)}}^{+} G(\mathbf{A}^{(n)})}\!,
\end{split}
\end{equation}
Hence, the update of parameters $\mathbf{A}^{(n)}$ can be written as:
\begin{equation}\label{updateAn}
\begin{split}
\hat{\mathbf{A}}^{(n)}_{\mbox{MM}} & = {\mathbf{A}}^{(n)} \circledast \frac{\nabla_{\mathbf{A}^{(n)}}^{+} G(\mathbf{A}^{(n)})}{\nabla_{\mathbf{A}^{(n)}}^{-} G(\mathbf{A}^{(n)})} \\
& = {\mathbf{A}}^{(n)} \circledast \frac{\Psi(\mathbf{T}_n) \! \oslash \! ( \mathbf{A}^{(n)} \mathbf{A}^{(\!-\!n\!)^T} ) \mathbf{A}^{(\!-\!n\!)}}{\mathbf{1} \mathbf{A}^{(\!-\!n\!)}}.
\end{split}
\end{equation}

If $n = N$, in the similar way, we obtain the gradient of $G(\mathbf{A}^{(N)}, \hat{\mathbf{A}}^{(N)})$:
\begin{equation}\label{gradientGWNTFAuxAN}
\begin{split}
& \frac{\partial G(\mathbf{A}^{(\!N\!)} \! , \hat{\mathbf{A}}^{(\!N\!)}\!)}{\partial \hat{\mathbf{A}}_{(N)_{i_Nr}}} \! = \! \beta \! \left( \mathbf{1} \! - \! \Psi(\mathbf{T}_{\!N}) \! \oslash \! ( \mathbf{A}^{\!(\!N\!)\!} \mathbf{A}^{\!(\!-\!N\!)^T\!} ) \right) \! \mathbf{A}^{(\!-\!N\!)} \\
& \qquad\qquad\qquad\qquad + \mu \left( \mathbf{V} - \mathbf{D} \right) \mathbf{A}^{(N)}\\
& = \underbrace{ \!\! \left( \! \beta \mathbf{1} \mathbf{A}^{\!(\!-\!N\!)\!} \!\! + \!\! \mu \mathbf{V}\mathbf{A}^{\!(\!N\!)\!} \right) \! }_{\nabla_{\mathbf{A}^{(N)}}^{-} G(\mathbf{A}^{(N)})} \!\! - \!\! \underbrace{ \left( \! \beta \Psi(\mathbf{T}_{\!N\!}) \!\! \oslash \!\! ( \mathbf{A}^{\!(\!N\!)\!} \! \mathbf{A}^{\!(\!-\!N\!)\!^T\!}\! ) \! \mathbf{A}^{\!(\!-\!N\!)\!} \!\! + \!\! \mu \mathbf{D} \mathbf{A}^{\!(\!N\!)\!} \! \right) }_{\nabla_{\mathbf{A}^{(N)}}^{+} G(\mathbf{A}^{(N)})} \!,
\end{split}
\end{equation}
where $\mathbf{D}_{ii} = \sum_{j} \mathbf{V}_{ij}$ denotes the diagonal matrix.

Then, the update rule of $\mathbf{A}^{(N)}$ can be represented as:
\begin{equation}\label{updateAN}
\begin{split}
\hat{\mathbf{A}}^{(N)}_{\mbox{MM}} & = {\mathbf{A}}^{(N)} \circledast \frac{\nabla_{\mathbf{A}^{(N)}}^{+} G(\mathbf{A}^{(N)})}{\nabla_{\mathbf{A}^{(N)}}^{-} G(\mathbf{A}^{(N)})} \\
& = \hat{\mathbf{A}}^{\!(\!N\!)\!} \! \circledast \! \frac{\beta\Psi(\mathbf{T}_N) \! \oslash \! ( \mathbf{A}^{(N)} \mathbf{A}^{(\!-\!N\!)^{\!T}} \! \! ) \mathbf{A}^{(\!-\!N\!)} \! + \! \mu \mathbf{D} \mathbf{A}^{\!(\!N\!)\!}}{ \beta \mathbf{1} \mathbf{A}^{(\!-\!N\!)} + \mu \mathbf{V} \mathbf{A}^{(N)}}.
\end{split}
\end{equation}

Algorithm \ref{algorithm} recapitulates the update rules of GWNTF.
Matricization and vectorization of tensorial data leads to fast and simple implementations of proposed method.

\section{Comparison Framework and Results}
\label{section4}
In this section, we compare the proposed GWNTF with six dimensional reduction and clustering algorithms for image clustering tasks.

\subsection{Experimental settings.}

We used Columbia Object Image Library (COIL20) \cite{Nene1996} and  CMU PIE face database (PIE\_pose27) \cite{1004130} for our experiments.
The COIL20 data set consists of $1,440$ object images distributed over $20$ subjects, in which the dimension of each image is $32 \times 32$.
The PIE\_pose27 data set, a subset of the PIE face database, consisting of $2,856$ images.
It contains $42$ face images of $68$ people with different lighting conditions, and the size of all images is compressed to $32 \times 32$.
We reported the average results over $50$ Monte-Carlo runs.

\begin{itemize}
 \itemsep=0.0pt
  \item \textbf{Canonical K-means clustering method (Kmeans)} \cite{hartigan1979algorithm}: {We unfold each tensorial sample into a vector and apply them to $k$-means clustering directly.}
  \item \textbf{Nonnegative Matrix Factorization (NMF)} \cite{lee2000algorithms}:
      We unfold tensorial original data into a matrix to apply it to NMF algorithm. Then we obtain a low rank activation matrix and apply it to clutering tasks.
  \item \textbf{Graph Regularization Nonnegative Matrix Decomposition (GNMF)} \cite{cai2010graph}: We unfold tensorial data into a matrix and reduce its dimension into $K$ by GNMF, where the dictionary matrix is restricted in graph regularization term. The coefficient of regularization $\lambda = 100$, and the hyperparameter $p=5$.
  \item \textbf{Nonnegative Matrix Factorization with Sinkhorn Distance (SDNMF)} \cite{flamary2016optimal}:Similar to the above, we unfold the tensorial data into a matrix and apply it to SDNMF. The loss function between the unfolding tensor and reconstruction data is based on the Sinkhorn distance.
      We set the hyperparameters of regularizer $\lambda = 100$, $p = 5$, $\gamma = 1$, and $\epsilon = 10$.
  \item \textbf{Nonnegative Canonical Polyadic Decomposition (NCP)}  \cite{shashua2005non}:
      We implemented K-means clustering after factorizing the image set with NTF.
  \item \textbf{Graph Regularized Nonnegative Canonical Polyadic Decomposition (GNCP)} \cite{wang2011image}:
      GNCP takes the manifold structure of the data space in Nonnegative Canonical Polyadic and learns the high performance low-rank representations of tensorial data.
      The hyperparameter $\lambda = 10^4$, $p = 5$.
  \item \textbf{Graph Regularized Wasserstein nonnegative tensor factorization (GWNTF)}:
      We utilized Wassersterin distance to measure the original and reconstructed tensors, and combined the manifold structure of data space to improve the performance of low-rank representation.
      The hyperparameters $\alpha = \beta = 1$, $\mu = 10^4$, $\lambda = 100$, and $p = 5$.
\end{itemize}

\subsection{Main Results.}

\begin{table}[!t]
\vspace{-0.2cm}
\centering
\caption{The average Performance of COIL20.}\label{tab1}
\scalebox{0.83}{
\setlength{\tabcolsep}{2mm}{
\begin{tabular}{lc|c|c|c}
  \toprule[2pt]
  \multirow{7}{*}{}
    & \multicolumn{1}{c}{ACC(\%)} & \multicolumn{1}{c}{NMI(\%)} & \multicolumn{1}{c}{MI(\%)} & \multicolumn{1}{c}{Purity(\%)}\\
   \midrule[1pt]
    Kmeans & 54.03  & 72.54  & 70.48 & 57.99  \\
    NMF & 57.92  & 70.93  & 69.42 & 61.53 \\
    GNMF & 73.89  & 87.47  & 85.84 & 78.68 \\
    SDNMF & 78.82  & 87.34  & 86.62 & 81.18 \\
    NCP & 64.30  & 74.60  & 73.92 & 67.26 \\
    GNCP & 78.14  & 88.97  & 87.84 & 81.90 \\
    GWNTF & 78.23  & 90.52  & 88.90 & 82.67 \\
  \bottomrule[2pt]
\end{tabular}}
}
\vspace{-0.3cm}
\end{table}

\begin{table}[!t]
\centering
\caption{The average Performance of PIE\_pose27.}\label{tab2}
\scalebox{0.83}{
\setlength{\tabcolsep}{2mm}{
\begin{tabular}{lc|c|c|c}
  \toprule[2pt]
  \multirow{7}{*}{}
    & \multicolumn{1}{c}{ACC(\%)} & \multicolumn{1}{c}{NMI(\%)} & \multicolumn{1}{c}{MI(\%)} & \multicolumn{1}{c}{Purity(\%)}\\
   \midrule[1pt]
    Kmeans & 24.58  & 49.33  & 48.34 & 26.75  \\
    NMF & 29.20  & 56.21  & 55.38 & 31.34 \\
    GNMF & 77.38  & 92.49  & 91.25 & 81.90 \\
    SDNMF & 74.23  & 91.30  & 89.73 & 79.94 \\
    NCP & 34.26  & 59.42  & 58.41 & 36.90 \\
    GNCP & 76.01  & 92.25  & 90.98 & 81.42 \\
    GWNTF & 78.26  & 92.97  & 91.69 & 82.78 \\
  \bottomrule[2pt]
\end{tabular}}
}
\vspace{-0.3cm}
\end{table}

Table \ref{tab1} and table \ref{tab2} show the results of four evaluations, such as Accuracy (ACC), Normalized Mutual Information (NMI), Mutual Information (MI), and Purity, on COIL20 and PIE\_pose27 data set, respectively.
From the table, we see that the clustering results of GWNTF is competitive with that of SDNMF in COIL20. We note that NMI and MI of GWNTF are $3\%$ and $2\%$ higher than that of SDNMF, respectively, which means the structure information of each mode is necessary for low-rank representation. The purity of GWNTF is the highest compared with the others.

From the table \ref{tab2}, we see that the clustering accuracy of GWNTF is more than $4$\% absolutely higher than that of the SDNMF in PIE\_pose27.
It can be noted that all evaluations of GWNTF obtain the best performance than other six algorithms.
It demonstrates the efficiency of our algorithm.

\section{Conclusion}
\label{section5}

In this paper, we proposed a Graph regularized Wasserstein Nonnegative Tensor Factorization, GWNTF, which exploits manifold structure information of data space and features correlation.
We utilized Maximization-Minimization (MM) algorithms to relax the objective function with respect to the latent factors and optimize them.
By this approximation, GWNTF outperforms NMF and NTF based algorithms in the clustering tasks with two image data sets.
Although GWNTF performs well, it still limits the applicability to large-scale or high-order data.
We will focus on the efficient calculations of the transport tensor in the future.

\small
\bibliographystyle{IEEEbib}
\bibliography{strings,refs}

\end{document}